\begin{document}
\twocolumn[

\aistatstitle{Variational Sequential Monte Carlo}

\aistatsauthor{Christian A. Naesseth \And Scott W. Linderman \And  Rajesh Ranganath \And David M. Blei }

\aistatsaddress{Link\"oping University \And Columbia University \And New York University \And Columbia University } 
]

\begin{abstract}
  Many recent advances in large scale probabilistic inference rely
  on variational methods. The success of variational approaches
  depends on \begin{enumerate*}[label=(\roman*)]
\item formulating a flexible parametric family of distributions, and
\item optimizing the parameters to find the member of this family that
  most closely approximates the exact posterior.
\end{enumerate*} In this paper we present a new approximating family
of distributions, the \gls{VSMC} family, and show how to optimize it in
variational inference. \gls{VSMC} melds \gls{VI} and \gls{SMC},
providing practitioners with flexible, accurate, and powerful Bayesian
inference.  The \gls{VSMC} family is a variational family that can approximate
the posterior arbitrarily well, while still allowing for efficient
optimization of its parameters. We demonstrate its utility on state
space models, stochastic volatility models for financial data, and 
deep Markov models of brain neural circuits.
\end{abstract}

\author{
  Christian A. Naesseth\\
  Link\"oping University\\
  \texttt{christian.a.naesseth@liu.se} \\
  \And
  Scott W. Linderman\\
  Columbia University\\
  \texttt{scott.linderman@columbia.edu} \\
  \AND
  Rajesh Ranganath\\
  New York University\\
  \texttt{rajeshr@cims.nyu.edu} \\
  \And
  David M. Blei\\
  Columbia University\\
  \texttt{david.blei@columbia.edu} \\
}

\glsresetall

\section{Introduction}\label{sec:intro}
Complex data like natural images, text, and medical records
require sophisticated models and algorithms. Recent advances in these
challenging domains have relied upon \gls{VI}~\citep{Kingma2014, Hoffman2013, ranganath2016deep}. 
Variational inference excels in quickly approximating the model posterior,
yet these approximations are only useful insofar as they are accurate.
The challenge is to balance faithful posterior approximation and fast optimization.

We present a new approximating family of distributions called
\gls{VSMC}. \gls{VSMC} blends \gls{VI} and \gls{SMC}
\citep{stewart1992,GordonSS:1993,kitagawa1996monte}, providing
practitioners with a flexible, accurate, and powerful approximate
Bayesian inference algorithm.  \gls{VSMC} is an efficient algorithm that
can approximate the posterior arbitrarily well.

Standard \gls{SMC} approximates a posterior distribution of latent
variables with $N$ weighted particles iteratively drawn from a
proposal distribution. The idea behind \emph{variational} \gls{SMC} is
to view the parameters of the proposal as indexing a family of
distributions over latent variables.  Each distribution in this
variational family corresponds to a particular choice of proposal; to
sample the distribution, we run \gls{SMC} to generate a set
of particles and then randomly select one with probability
proportional to its weight.  Unlike typical variational families, the
\gls{VSMC} family trades off fidelity to the posterior with
computational complexity: its accuracy increases with the number of
particles $N$, but so does its computational cost.

\begin{figure*}[t]
  \begin{center}
    \includegraphics{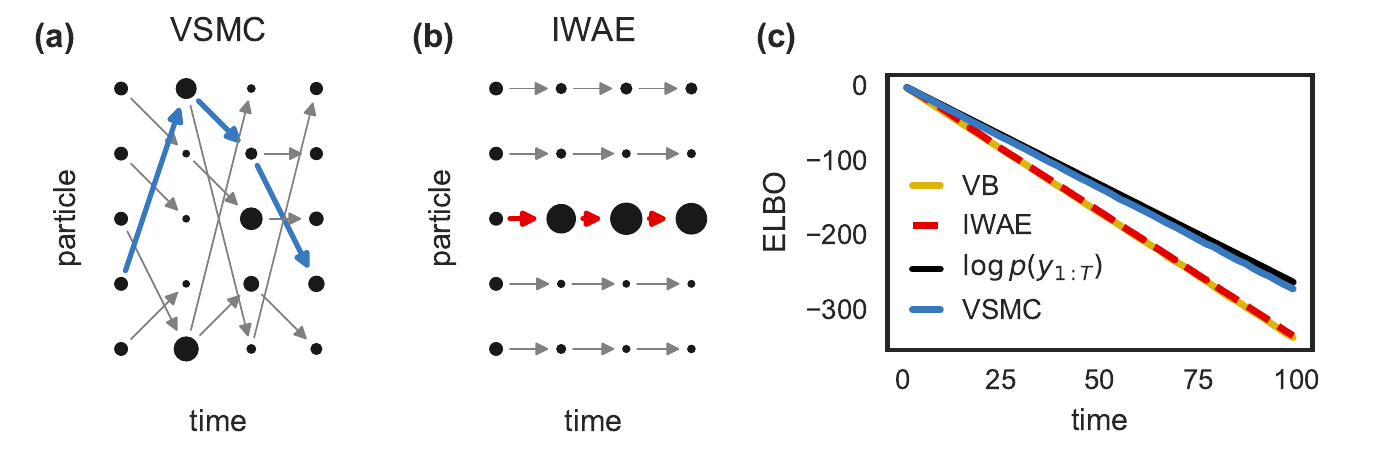}
  \end{center}
  \vspace{-5mm}
  \caption{\textit{Comparing \acrshort{VSMC} and the \acrshort{IWAE}.}
    (a) \acrshort{VSMC} constructs a weighted set of particle trajectories using SMC
    and then samples one according to the final weight. Here, the size of the dot
    is proportional to the weight,~$w_{t}^{i}$; the gray arrows denote the
    ancestors,~$a_{t-1}^{i}$; and the blue arrows denote the chosen path,~$b_{1:T}$.
    (b) \acrshort{IWAE} does the same, but without resampling. This leads to 
    particle degeneracy as time increases---only one particle has nonneglible weight
    at time~$T$. 
    (c) The ELBO suffers from this degeneracy: all are comparable when~$T$ is small,
    but as time increases the \acrshort{IWAE}
    provides minimal improvement over standard \acrshort{VB}, whereas \acrshort{VSMC}
    still achieves nearly the true marginal likelihood.}
  \label{fig:fig1}
\end{figure*}

We develop the \gls{VSMC} approximating family, 
derive its corresponding variational lower bound,
and design a stochastic gradient ascent algorithm to optimize its
parameters. We connect \gls{VSMC} to the \gls{IWAE} \citep{Burda2016} and show that the
\gls{IWAE} lower bound is a special case of the \gls{VSMC} bound.
As an illustration, consider approximating the
following posterior with latent variables $x_{1:T}$ and observations
$y_{1:T}$,
\begin{align*}
{p(x_{1:T}\mid y_{1:T})=\prod_{t=1}^T\mathcal{N}(x_t\g
  0,1) \, \mathcal{N}(y_t\g x_t^2,1)/p(y_{1:T})}.
\end{align*}
This is a toy Gaussian \gls{SSM} where the observed value at 
each time step depends on the square of the latent state.
Figure~\ref{fig:fig1}c shows the
approximating power of \gls{VSMC} versus that of the \gls{IWAE} and of
standard \gls{VB}. As the length of the sequence~$T$ increases,
na\"ive importance sampling effectively collapses to use only a single 
particle. \gls{VSMC} on the other hand maintains a diverse set of 
particles and thereby achieves a significantly tighter lower bound of 
the log-marginal likelihood $\log p(y_{1:T})$.

We focus on inference in state space and time series models, but emphasize that 
\gls{VSMC} applies to any sequence of probabilistic models, just like 
standard \gls{SMC} \citep{del2006sequential,doucet2009tutorial,Naesseth2014}.

In Section~\ref{sec:expts}, we demonstrate the advantages of \gls{VSMC} on both simulated and real 
data. First, we show on simulated linear Gaussian \gls{SSM} data that 
\gls{VSMC} can outperform the (locally) 
optimal proposal \citep{doucet2001introduction,doucet2009tutorial}.
Then we compare \gls{VSMC} with \gls{IWAE} for a stochastic volatility model 
on exchange rates from financial markets. 
We find that \gls{VSMC} achieves better posterior inferences
and learns more efficient proposals.
Finally, we study recordings of macaque monkey neurons using a 
probabilistic model based on recurrent neural networks. \gls{VSMC} reaches the same 
accuracy as \gls{IWAE}, but does so with less computation.


\paragraph{Related Work} 
Much effort has been dedicated to learning good 
proposals for \gls{SMC} \citep{Cornebise2009}. \citet{Guarniero2015} adapt proposals through iterative refinement.
\citet{naessethLS2015nested} uses a Monte Carlo approximation to the (locally) optimal proposal \citep{doucet2009tutorial}.
\citet{Gu2015} learn proposals by minimizing the \gls{KL} from the 
posterior to
proposal using SMC samples; this strategy can suffer from high variance when the initial SMC proposal is poor. \citet{Paige2016} learn proposals by 
forward simulating and inverting the model. In contrast to all these methods, \gls{VSMC} optimizes the proposal directly with respect to KL divergence from the \gls{SMC}
sampling process to the posterior.

\gls{VSMC} uses auxillary variables 
in a posterior approximation. This relates to work in \gls{VI}, such as
Hamiltonian VI \citep{Salimans2015}, variational Gaussian processes \citep{Tran2016}, 
hierarchical variational models \citep{Ranganath2016}, and deep auxiliary 
variational auto-encoders \citep{Maaloe2016}.
Another approach uses a sequence of invertible functions 
to transform a simple variational approximation to a complex one 
\citep{Rezende2015,dinh2014nice}.
All of these rich approximations can be 
embedded inside \gls{VSMC} to build more flexible proposals.

\citet{Archer2015, Johnson2016} develop variational inference
for state space models with conjugate dynamics, while \citet{Krishnan2017}
develop variational approximations for models with nonlinear dynamics and additive Gaussian noise.
In contrast, \gls{VSMC} is agnostic to the distributional choices in
the dynamics and noise. 

Importance weighted auto-encoders \citep{Burda2016} obtain the same 
lower bound as \gls{VIS}, a special case of \gls{VSMC}. However, 
\gls{VIS} provides a new interpretation that enables a more accurate 
variational approximation; this relates to another interpretation of 
\gls{IWAE} by \citet{Cremer2017,Bachman2015}. 
Variational particle approximations \citep{saeedi2014variational} also 
provide variational approximation that improve with the number of particles, 
but they are restricted to discrete latent variables. 

Finally, the log-marginal likelihood lower bound 
\eqref{eq:lb} was developed concurrently and independently by 
\citet{Maddison2017} and \citet{Le2017aesmc}. 
The difference with our work
lies in how we derive the bound and the implications we explore. 
\citet{Maddison2017,Le2017aesmc} derive the bound
using Jensen's inequality on the \gls{SMC} expected log-marginal likelihood
estimate, focusing on approximate marginal likelihood estimation of
model parameters. 
Rather, we derive \eqref{eq:lb} as a tractable lower bound to
the exact \gls{ELBO} for the new variational family \gls{VSMC}. 
In addition to a lower bound on the log-marginal likelihood,
this view provides a new variational approximation to the posterior.


\section{Background}
We begin by introducing the foundation for \glsreset{VSMC}\gls{VSMC}.  
Let $p(x_{1:t},y_{1:t})$ be a
sequence of probabilistic models for latent (unobserved)~$x_{1:t}$ and
data~$y_{1:t}$, with $t=1,\ldots,T$. In Bayesian inference, we are interested in
computing the posterior distribution~$p(x_{1:T} \given y_{1:T})$.  Two
concrete examples, both from the time-series literature, are hidden
Markov models and state space models \citep{cappe2005inference}.  In
both cases, the joint density factorizes as
\begin{align*}
  p(x_{1:T},y_{1:T}) &= f(x_1) \prod_{t=2}^T f(x_t \given x_{t-1})
                       \prod_{t=1}^T g(y_t \given x_t),
\end{align*}
where $f$ is the prior on $x$, and $g$ is the observation (data)
distribution. For most models computing the
posterior $p(x_{1:T} \given y_{1:T})$ is computationally intractable, and we need approximations
such as \gls{VI} and \gls{SMC}. Here we construct posterior
approximations that combine these two ideas.

In the following sections, we review \acrlong{VI} and \acrlong{SMC},
develop a variational approximation based on the samples generated by
\gls{SMC}, and develop a tractable objective to improve the quality of
the \gls{SMC} variational approximation. For concreteness, we focus on
the state space model above. But we emphasize that \gls{VSMC} applies 
to any sequence of probabilistic models, just like standard \gls{SMC} 
\citep{del2006sequential,doucet2009tutorial,Naesseth2014}.



\paragraph{Variational Inference}\label{sec:vi}
In \acrlong{VI} we postulate an 
approximating family of distributions with variational parameters~$\lambda$, $q(x_{1:T};\lambda)$. Then we 
minimize a divergence, often the \gls{KL} 
divergence, between the approximating family and the posterior so that~${q(x_{1:T};\lambda) \approx p(x_{1:T} \given y_{1:T})}$. This minimization is 
equivalent to maximizing the \gls{ELBO} \citep{Jordan1999},
\begin{align}
\Ls(\lambda) &= \E_{q(x_{1:T}; \lambda)}\left[\log p(x_{1:T},y_{1:T})-\log 
q(x_{1:T}; \lambda)\right].\label{eq:elbo}
\end{align}
\gls{VI} turns posterior inference into an optimization problem.


\paragraph{Sequential Monte Carlo}\label{sec:smc}
\gls{SMC} is a sampling method designed to approximate a
sequence of distributions, $p(x_{1:t} \given y_{1:t})$ for
$t = 1 \ldots T$ with special emphasis on the posterior
$p(x_{1:T} \given y_{1:T})$.  For a thorough
introduction to \gls{SMC} see
\citet{doucet2009tutorial,doucet2001introduction,schonldwnsd2015}.

To approximate $p(x_{1:t} \given y_{1:t})$ \gls{SMC} uses weighted samples,
\begin{align}
  p(x_{1:t} \given y_{1:t}) \approx \widehat p(x_{1:t}\given y_{1:t})
  \triangleq
 \sum_{i=1}^N \frac{w_t^i}{\sum_\ell 
w_t^\ell} \delta_{x_{1:t}^i}, \label{eq:smcapprox}
\end{align}
where $\delta_X$ is the Dirac measure at $X$. 

We construct the weighted set of particles sequentially for
$t=1,\ldots,T$. At time $t=1$ we use standard importance sampling
$x_1^i\sim r(x_1)$.
For $t>1$, we start each step by \emph{resampling} 
auxiliary \emph{ancestor variables} ${a_{t-1}^i \in \{1, \ldots, N\}}$ with 
probability proportional to the importance weights $w_{t-1}^{j}$; next
we propose new values, append them to the end of the trajectory,
and reweight as follows:
\begin{align*}
 & &  &\textit{resample } & & a_{t-1}^i \sim \textrm{Categorical}(\nicefrac{w_{t-1}^j}{\sum_\ell w_{t-1}^\ell})
 \\
 & &  &\textit{propose } & & x_{t}^i \sim r(x_t\given x_{t-1}^{a_{t-1}^i}),
  \\
 & &  &\textit{append } & &  x_{1:t}^i = (x_{1:t-1}^{a_{t-1}^i},x_t^i),
  \\
 & &   &\textit{reweight } & & w_t^i = 
 \nicefrac{f(x_t^i\given x_{t-1}^{a_{t-1}^i}) \, g(y_t\given x_t^i)}{r(x_t^i\given x_{t-1}^{a_{t-1}^i})}.
\end{align*}
We refer to the final particles (samples) $x_{1:T}^i$ as
\emph{trajectories}. Panels (a) and (b) of Figure~\ref{fig:fig1} show
sets of weighted trajectories. The size of the dots represents
the weights $w_t^i$ and the arrows represent the ancestors
$a_{t-1}^i$.  Importance sampling omits the resampling step, so each
ancestor is given by the corresponding particle for the preceding time
step.

The trajectories $x_{1:T}^i$ and weights $w_T^i$ define the \gls{SMC}
approximation to the posterior. Critically, as we increase the number
of particles, the posterior approximation becomes arbitrarily
accurate.  \gls{SMC} also yields an unbiased estimate of the marginal
likelihood,
\begin{align}
  \label{eq:smc_margll}
  \widehat{p}(y_{1:T})
  &= \prod_{t=1}^T \frac{1}{N}\sum_{i=1}^N w_t^i.
\end{align}
This estimate will play an important role in the \gls{VSMC} objective.

The proposal distribution ${r(x_t \given x_{t-1})}$ is the key design
choice. A common choice is the model prior $f$---it is known as
the~\gls{BPF} \citep{GordonSS:1993}. However, proposing from the prior often leads to a
poor approximation for a small number of particles, especially if $x_t$ is
high-dimensional. Variational \gls{SMC}
addresses this shortcoming; it learns parameterized
proposal distributions for efficient inference.



\section{Variational Sequential Monte Carlo}\label{sec:vsmc}

We develop \gls{VSMC}, a new class of variational approximations based
on \gls{SMC}. 
We first define how to sample from the \gls{VSMC} family and then derive its
distribution. Though generating samples is straightforward,
the density is intractable. To this end, we derive a tractable
objective, a new lower bound to the \gls{ELBO}, that is amenable to
stochastic optimization. Then, we present an
algorithm to fit the variational parameters. Finally, we explore how to 
learn model parameters using \acrlong{VEM}.

To sample from the \gls{VSMC} family, we
run \gls{SMC} (with the proposals parameterized by variational parameters
$\lambda$) and then sample once
from the empirical approximation of the
posterior~\eqref{eq:smcapprox}. Because the proposals
$r(x_t\mid x_{t-1}\g\lambda)$ depend on $\lambda$, so does the
\gls{SMC} empirical approximation.  Algorithm~\ref{alg:vsmc} summarizes 
the generative process for the \gls{VSMC} family.

\begin{algorithm}[t]
\caption{Variational Sequential Monte Carlo}\label{alg:vsmc}
\begin{algorithmic}[1]
\REQUIRE Targets $p(x_{1:t},y_{1:t})$, proposals 
$r(x_t \given x_{t-1}\g \lambda)$, and number of particles $N$. 
\vspace{.5em}
\FOR{$i=1 \ldots N$}
\STATE Simulate $x_1^i$ from $r(x_1\g\lambda)$
\STATE Set $w_1^i = \nicefrac{f(x_1^i) \, g(y_1 \given x_1^i)}{r(x_1^i\g\lambda)}$
\ENDFOR
\FOR{$t=2 \ldots T$}
\FOR{$i=1 \ldots N$}
\STATE Simulate $a_{t-1}^i$ with $\prb(a_{t-1}^i=j) = 
\frac{w_{t-1}^j}{\sum_\ell w_{t-1}^\ell}$
\STATE Simulate $x_t^i$ from $r(x_t \given x_{t-1}^{a_{t-1}^i}\g\lambda)$
\STATE Set $x_{1:t}^i = (x_{1:t-1}^{a_{t-1}^i},x_t^i)$
\STATE Set $w_t^i = 
\nicefrac{f(x_t^i\given x_{t-1}^{a_{t-1}^i}) \, g(y_t\given x_t^i)}{ 
  r(x_t^i\given x_{t-1}^{a_{t-1}^i}\g\lambda)}$
\ENDFOR
\ENDFOR
\STATE Simulate $b_T$ with $\prb(b_{T}=j) = 
\nicefrac{w_{T}^j}{\sum_\ell w_{T}^\ell}$
\RETURN $x_{1:T} \triangleq x_{1:T}^{b_T}$
\end{algorithmic}
\end{algorithm}
The variational distribution $q(x_{1:T}\g\lambda)$
marginalizes out all the variables produced in the sampling process,
save for the output sample $x_{1:T}$.  This marginal comes
from the joint distribution of all variables generated by \gls{VSMC},
\begin{align}
  &\widetilde\phi(x_{1:T}^{1:N},a_{1:T-1}^{1:N},b_T \g \lambda) =
    \underbrace{\bigg[\prod_{i=1}^N r(x_1^i\g\lambda)\bigg]}_{\textit{step 2}} \cdot \nonumber\\
    &\cdot
    \prod_{t=2}^T \prod_{i=1}^N
    \underbrace{\bigg[\frac{w_{t-1}^{a_{t-1}^i}}{\sum_\ell w_{t-1}^\ell} }_{\textit{step 7}}
    \underbrace{r(x_t^i \given x_{t-1}^{a_{t-1}^i}\g\lambda) \bigg]}_{\textit{step 8}}
    \underbrace{\bigg[\frac{w_T^{b_T}}{\sum_{\ell}w_T^\ell} \bigg]}_{\textit{step 13}}.
\label{eq:vsmc-all}
\end{align}
(We have annotated this equation with the steps from the algorithm.)
In this joint, the final output sample is defined by extracting the
$b_T$-th trajectory ${x_{1:T} = x_{1:T}^{b_T}}$.  Note that the data
$y_{1:T}$ enter via the weights and (optionally) the proposal
distribution.  This joint density is easy to calculate, but for
variational inference we need the marginal distribution
of~$x_{1:T}$. We derive this next.

Let~$b_t \triangleq a_t^{b_{t+1}}$ for
$t \leq T-1$ denote the ancestors for the trajectory $x_{1:T}$ returned by 
Algorithm~\ref{alg:vsmc}. Furthermore, let~$\neg b_{1:T}$ be all 
particle indices \emph{not} equal to $(b_1,\ldots,b_T)$, \ie exactly all the 
particles that were not returned by Algorithm~\ref{alg:vsmc}. 
Then the marginal distribution of $x_{1:T} = x_{1:T}^{b_{1:T}} = (x_1^{b_1},x_2^{b_2},\ldots, 
x_{T}^{b_T})$ is given by the following proposition.
\begin{prop}
The \gls{VSMC} 
approximation on $x_{1:T}$ is
\begin{align}
&q(x_{1:T} \given y_{1:T} \g\lambda) \nonumber \\
&= p(x_{1:T},y_{1:T})  \, 
\E_{\widetilde\phi\left(x_{1:T}^{\neg b_{1:T}},a_{1:T-1}^{\neg b_{1:T-1}} \g \lambda \right)}
\left[\widehat p(y_{1:T})^{-1}\right].\label{eq:var-dist}
\end{align}
\label{prop:dist}
\end{prop}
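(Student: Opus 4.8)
The plan is to compute the law of the returned trajectory $x_{1:T}=x_{1:T}^{b_{1:T}}$ directly from the joint $\widetilde\phi$ in \eqref{eq:vsmc-all}: condition on the chosen ancestral lineage $b_{1:T}$, collapse the weights and proposals along that lineage, and then marginalize the remaining particles using the permutation symmetry of $\widetilde\phi$ in the particle labels.

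First I would fix the lineage $b_{1:T}$ (recall $b_t=a_t^{b_{t+1}}$ for $t<T$) and split the product in \eqref{eq:vsmc-all} into the factors touching the chosen trajectory---namely $r(x_1^{b_1}\g\lambda)$ from step~2, the factors $\tfrac{w_{t-1}^{b_{t-1}}}{\sum_\ell w_{t-1}^\ell}\,r(x_t^{b_t}\given x_{t-1}^{b_{t-1}}\g\lambda)$ for $t=2,\ldots,T$, and $\tfrac{w_T^{b_T}}{\sum_\ell w_T^\ell}$ from step~13---versus the remaining factors $\Psi$, which involve only $(x_{1:T}^{\neg b_{1:T}},a_{1:T-1}^{\neg b_{1:T-1}})$. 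Multiplying the chosen-trajectory proposals by the chosen-trajectory weights, the proposal densities cancel the weight denominators, the numerators combine to $\prod_{t=1}^T f(x_t^{b_t}\given x_{t-1}^{b_{t-1}})\,g(y_t\given x_t^{b_t})=p(x_{1:T}^{b_{1:T}},y_{1:T})$, and the leftover normalizers reduce via \eqref{eq:smc_margll} to $\prod_{t=1}^T(\sum_\ell w_t^\ell)^{-1}=\bigl(N^T\widehat p(y_{1:T})\bigr)^{-1}$. Hence $\widetilde\phi=\dfrac{p(x_{1:T}^{b_{1:T}},y_{1:T})}{N^T\widehat p(y_{1:T})}\,\Psi$.

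Next I would check that $\Psi$, viewed as a function of $(x_{1:T}^{\neg b_{1:T}},a_{1:T-1}^{\neg b_{1:T-1}})$ with the reference trajectory $x_{1:T}^{b_{1:T}}$ held fixed, is itself a normalized density---this is precisely the conditional SMC (particle-Gibbs) law of the non-chosen variables---by integrating over the non-reference particles from $t=T$ down to $t=1$ and using $\sum_{a}\tfrac{w_{t-1}^{a}}{\sum_\ell w_{t-1}^\ell}\int r(x_t\given x_{t-1}^{a}\g\lambda)\,\myd x_t=1$ at each non-reference index. The marginal is then obtained by pinning $x_{1:T}^{b_{1:T}}=x_{1:T}$, summing over $b_T$ and every ancestor variable, and integrating out the remaining particles, which gives $q(x_{1:T}\given y_{1:T}\g\lambda)=\sum_{(b_1,\ldots,b_T)}\tfrac{p(x_{1:T},y_{1:T})}{N^T}\,\E_{\Psi}\!\left[\widehat p(y_{1:T})^{-1}\right]$, where the outer sum encodes the $N^T$ choices of lineage and $\widehat p(y_{1:T})$ inside the expectation still depends on the fixed $x_{1:T}$ through its own weights. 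Because $\widehat p(y_{1:T})$ is symmetric in the particle labels and the relabelings that keep the reference slots fixed carry one $\Psi$ to another, every lineage contributes equally, so the $N^T$ cancels and, writing $\widetilde\phi(x_{1:T}^{\neg b_{1:T}},a_{1:T-1}^{\neg b_{1:T-1}}\g\lambda)$ for this conditional SMC law, we recover \eqref{eq:var-dist}.

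The main obstacle is the bookkeeping in this last step: being precise about which ancestor variables are pinned by the lineage and which are free, and justifying the permutation-symmetry argument that matches the $N^T$ from the sum over lineages against the $N^T$ from the weight normalizers; one must also keep in mind that $\widetilde\phi(x_{1:T}^{\neg b_{1:T}},a_{1:T-1}^{\neg b_{1:T-1}}\g\lambda)$ is the \emph{conditional} law given the reference path, so the expectation in \eqref{eq:var-dist} is a function of $x_{1:T}$. Everything else is the cancellation identity, which is routine; as a sanity check I would verify $T=1$, where $\Psi=\prod_{i\neq b_1}r(x_1^i\g\lambda)$ and \eqref{eq:var-dist} reduces to $q(x_1\given y_1\g\lambda)=p(x_1,y_1)\,\E[\widehat p(y_1)^{-1}]$ with the expectation over $N-1$ i.i.d.\ proposal draws.
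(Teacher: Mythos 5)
Your proposal is correct and takes essentially the same route as the paper's proof: the identical algebraic cancellation along the chosen lineage yields $p(x_{1:T}^{b_{1:T}},y_{1:T})\,\bigl(N^T\widehat p(y_{1:T})\bigr)^{-1}$ times the conditional law of the non-chosen variables, and the $N^T$ is then absorbed by summing over lineages. The only difference is presentational: the paper disposes of the $N^T$ by asserting that $b_{1:T}\mid x_{1:T}$ is uniform on $\{1,\ldots,N\}^T$ and dividing by $N^{-T}$, whereas you sum over all $N^T$ lineages and invoke permutation symmetry to show each contributes equally --- which is in fact the justification for the paper's unproved uniformity claim.
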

\begin{proof}
\vspace{-8mm}
See the supplementary material \ref{sec:pf:dist}.
\end{proof}
This has an intuitive form: the density of the variational posterior
is equal to the exact joint times the expected inverse of the normalization
constant (c.f.~\eqref{eq:smc_margll}). While we can estimate this
expectation with Monte Carlo, it yields a biased estimate
of~${\log q(x_{1:T} \given y_{1:T}; \lambda)}$ and the 
\gls{ELBO}~\eqref{eq:elbo}. 

\paragraph{The surrogate \gls{ELBO}.}  To derive a
tractable objective, we develop a lower bound to the \gls{ELBO} that
is also amenable to stochastic optimization.  It is
\begin{align}
\widetilde{\mathcal{L}}(\lambda) &\triangleq \sum_{t=1}^T\E_{\widetilde\phi(x_{1:t}^{1:N},a_{1:t-1}^{1:N}\g\lambda)}\left[ 
\log \left( \frac{1}{N}\sum_{i=1}^N w_t^i\right) \right] \nonumber \\
& = \E
\left[\log \widehat{p}(y_{1:T})\right]
\label{eq:lb}
\end{align}
We call $\widetilde{\mathcal{L}}(\lambda)$ the \emph{surrogate
  \gls{ELBO}}.  It is a lower bound to the true \gls{ELBO} for
\gls{VSMC} or, equivalently, an upper bound on the \gls{KL}
divergence.  The following theorem formalizes this fact:
\begin{theorem}[Surrogate \gls{ELBO}]
The surrogate \gls{ELBO} \eqref{eq:lb},
is a lower bound to the \gls{ELBO} \eqref{eq:elbo} 
when $q$ is defined by \eqref{eq:var-dist}, \ie
\begin{align*}
\log p(y_{1:T}) \geq \mathcal{L}(\lambda) \geq \widetilde{\mathcal{L}}(\lambda).
\end{align*}
\label{thm:lb}
\end{theorem}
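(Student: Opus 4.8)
The plan is to prove the two inequalities separately; the left one is the classical bound and the right one is the real content. For $\log p(y_{1:T}) \geq \mathcal{L}(\lambda)$, substitute $p(x_{1:T},y_{1:T}) = p(x_{1:T}\given y_{1:T})\,p(y_{1:T})$ into \eqref{eq:elbo} to obtain the identity
\[
\mathcal{L}(\lambda) = \log p(y_{1:T}) - \mathrm{KL}\!\left(q(x_{1:T}\given y_{1:T}\g\lambda)\,\big\|\,p(x_{1:T}\given y_{1:T})\right),
\]
and note that the \gls{KL} term is nonnegative because, by Proposition~\ref{prop:dist}, $q(\cdot\given y_{1:T}\g\lambda)$ is a genuine normalized density (it is the marginal of the proper joint $\widetilde\phi$ of \eqref{eq:vsmc-all}). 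Everything then reduces to showing $\mathrm{KL}(q\,\|\,p(\cdot\given y_{1:T})) \leq \log p(y_{1:T}) - \widetilde{\mathcal{L}}(\lambda)$.

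For this I would not manipulate the formula of Proposition~\ref{prop:dist} with Jensen's inequality directly, as that bounds $\mathcal{L}(\lambda)$ on the wrong side; instead I would introduce an auxiliary ``target'' distribution $\bar\phi$ on the same variables as $\widetilde\phi$, \ie on $(x_{1:T}^{1:N},a_{1:T-1}^{1:N},b_T)$, obtained by reweighting the \gls{SMC} proposal joint by the normalized marginal-likelihood estimate,
\[
\bar\phi(x_{1:T}^{1:N},a_{1:T-1}^{1:N},b_T\g\lambda) \;\triangleq\; \widetilde\phi(x_{1:T}^{1:N},a_{1:T-1}^{1:N},b_T\g\lambda)\;\frac{\widehat p(y_{1:T})}{p(y_{1:T})}.
\]
This is a valid probability measure precisely because $\widehat p(y_{1:T})$ of \eqref{eq:smc_margll} is unbiased, $\E_{\widetilde\phi}[\widehat p(y_{1:T})] = p(y_{1:T})$. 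Two facts about $\bar\phi$ are needed. First, immediately from its definition and \eqref{eq:lb},
\[
\mathrm{KL}\!\left(\widetilde\phi\,\big\|\,\bar\phi\right) = \E_{\widetilde\phi}\!\left[\log\frac{p(y_{1:T})}{\widehat p(y_{1:T})}\right] = \log p(y_{1:T}) - \widetilde{\mathcal{L}}(\lambda).
\]
Second, letting $\Phi$ be the deterministic map that extracts the returned trajectory, $\Phi(x_{1:T}^{1:N},a_{1:T-1}^{1:N},b_T) = x_{1:T}^{b_{1:T}}$ with $b_t = a_t^{b_{t+1}}$, the pushforward $\Phi_*\bar\phi$ equals the exact posterior $p(x_{1:T}\given y_{1:T})$; equivalently, integrating $\widetilde\phi\,\widehat p$ over all variables with the returned trajectory pinned to $x_{1:T}$ yields $p(x_{1:T},y_{1:T})$. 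This is the statement that the conditional-\gls{SMC} target is correct, and it is proved by the same ancestor-index bookkeeping and telescoping of the per-step reweighting factors into $\widehat p$ that underlies Proposition~\ref{prop:dist}.

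The argument then closes by a data-processing step. By Algorithm~\ref{alg:vsmc} and Proposition~\ref{prop:dist}, $q(\cdot\given y_{1:T}\g\lambda)$ is exactly $\Phi_*\widetilde\phi$, while the previous paragraph identifies $\Phi_*\bar\phi = p(\cdot\given y_{1:T})$. Since a deterministic map cannot increase relative entropy (by the chain rule, $\mathrm{KL}(\widetilde\phi\|\bar\phi) = \mathrm{KL}(\Phi_*\widetilde\phi\|\Phi_*\bar\phi) + \E_{\Phi_*\widetilde\phi}[\mathrm{KL}(\widetilde\phi(\cdot\given\Phi)\|\bar\phi(\cdot\given\Phi))] \geq \mathrm{KL}(\Phi_*\widetilde\phi\|\Phi_*\bar\phi)$), we get
\[
\mathrm{KL}\!\left(q(x_{1:T}\given y_{1:T}\g\lambda)\,\big\|\,p(x_{1:T}\given y_{1:T})\right) \;\leq\; \mathrm{KL}\!\left(\widetilde\phi\,\big\|\,\bar\phi\right) \;=\; \log p(y_{1:T}) - \widetilde{\mathcal{L}}(\lambda).
\]
Combining with the identity $\mathcal{L}(\lambda) = \log p(y_{1:T}) - \mathrm{KL}(q\|p(\cdot\given y_{1:T}))$ from the first step gives $\mathcal{L}(\lambda) \geq \widetilde{\mathcal{L}}(\lambda)$, which together with $\log p(y_{1:T})\geq\mathcal{L}(\lambda)$ completes the chain.

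I expect the main obstacle to be the second fact about $\bar\phi$ — verifying $\Phi_*\bar\phi = p(\cdot\given y_{1:T})$, \ie that the reweighted joint's trajectory marginal is genuinely the posterior. All the real \gls{SMC} combinatorics live there: tracking the ancestry $b_{1:T}$, exploiting exchangeability over particle labels, and telescoping the weights. Everything else — the two \gls{ELBO}/\gls{KL} identities, the unbiasedness check that makes $\bar\phi$ a probability measure, and the data-processing inequality — is routine. It is also worth recording the mild absolute-continuity assumption on the proposals (the usual \gls{SMC} support condition, ensuring the weights are positive on the relevant set) under which these divergences are finite and the bound non-vacuous.
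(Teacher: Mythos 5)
Your proposal is correct, and it reaches the second inequality by a genuinely different route than the paper. The paper works directly with the marginal density formula of Proposition~\ref{prop:dist}: it writes the \gls{ELBO} as $-\int p(x_{1:T},y_{1:T})\,\E[\widehat p^{-1}]\log \E[\widehat p^{-1}]\,\myd x_{1:T}$ and applies the conditional Jensen inequality to the concave map $t\mapsto -t\log t$ with $t=\widehat p(y_{1:T})^{-1}$, which directly produces $\E_{\widetilde\phi}[\log\widehat p(y_{1:T})]$. You instead lift everything to the extended space, introduce the reweighted target $\bar\phi \propto \widetilde\phi\cdot\widehat p(y_{1:T})$ (the conditional-\gls{SMC}/particle-Gibbs target of Andrieu--Doucet--Holenstein), and invoke data processing under the trajectory-extraction map $\Phi$. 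Your two required facts check out: $\operatorname{KL}(\widetilde\phi\|\bar\phi)=\log p(y_{1:T})-\widetilde{\Ls}(\lambda)$ is immediate from the density ratio, and $\Phi_*\bar\phi=p(\cdot\given y_{1:T})$ follows from the identity $\widetilde\phi(x_{1:T}^{b_{1:T}},x_{1:T}^{\neg b_{1:T}},a_{1:T-1}^{\neg b_{1:T-1}})=N^{-T}p(x_{1:T}^{b_{1:T}},y_{1:T})\,\widehat p(y_{1:T})^{-1}\,\widetilde\phi(x_{1:T}^{\neg b_{1:T}},a_{1:T-1}^{\neg b_{1:T-1}})$ established inside the paper's proof of Proposition~\ref{prop:dist}, so the "main obstacle" you flag is already done work. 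At bottom the two arguments use the same single application of Jensen to $t\log t$ (data processing under a deterministic map \emph{is} conditional Jensen for the \gls{KL} integrand), but your framing buys something the paper's does not: it exhibits the gap exactly, $\Ls(\lambda)-\widetilde{\Ls}(\lambda)=\E_{q}\bigl[\operatorname{KL}\bigl(\widetilde\phi(\cdot\given x_{1:T})\,\|\,\bar\phi(\cdot\given x_{1:T})\bigr)\bigr]\geq 0$, makes both inequalities of the theorem instances of \gls{KL} nonnegativity on nested spaces, and connects \gls{VSMC} to the auxiliary-variable/hierarchical-variational view. The paper's version is more self-contained given Proposition~\ref{prop:dist} and avoids introducing $\bar\phi$. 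Your closing remark about the support condition on the proposals is a sensible technical caveat that the paper leaves implicit.
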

\begin{proof}
\vspace{-10mm}
See the supplementary material \ref{sec:pf:lb}.
\end{proof}
The surrogate \gls{ELBO} is the expected \gls{SMC}
log-marginal likelihood estimate. We can estimate it unbiasedly as a 
byproduct of sampling from the \gls{VSMC} variational approximation
(Algorithm~\ref{alg:vsmc}). We run the algorithm and use the
estimate to perform stochastic optimization of the surrogate \gls{ELBO}.

\paragraph{Stochastic Optimization.}
While the expectations in the surrogate \gls{ELBO} are still not
available in closed form, we can estimate it 
and its gradients with Monte Carlo. This admits
a stochastic optimization algorithm for finding the
optimal variational parameters of the \gls{VSMC} family. 

We assume the proposals ${r(x_t \given x_{t-1}; \lambda)}$ are
reparameterizable, \ie, we can simulate from~$r$ by
setting~${x_t = h(x_{t-1},\varepsilon_t\g\lambda), ~ \varepsilon_t \sim
  s(\varepsilon_t)}$ for some distribution $s$ not a function of
$\lambda$. With this assumption, rewrite the gradient of
\eqref{eq:lb} by using the reparameterization trick~\citep{Kingma2014,Rezende2014},
\begin{align}
\nabla \widetilde\Ls(\lambda) &= g_{\text{rep}}+ g_{\text{score}}\label{eq:rep-grad}\\
g_{\text{rep}} &= \E
\left[ \nabla\log \widehat{p}(y_{1:T})  \right], \nonumber\\
g_{\text{score}} &=\E
\left[ \log \widehat{p}(y_{1:T}) \nabla \log 
\widetilde\phi(a_{1:T-1}^{1:N}\given\varepsilon_{1:T}^{1:N}\g\lambda)\right].\nonumber
\end{align}
This expansion follows from the product rule, just as in the
generalized reparameterizations of~\citet{Ruiz2016}
and~\citet{Naesseth2017}.  
Note that all~$x_t^i$, implicit in the weights $w_t^i$ and 
$\widehat{p}(y_{1:T})$ are now replaced with their 
reparameterizations~$h(\cdot \g \lambda)$. The ancestor variables are discrete and cannot be
reparameterized---this can lead to high variance in the score function 
term, $g_{\text{score}}$ 
from \eqref{eq:rep-grad}. 

In Section~\ref{sec:expts}, we empirically assess
the impact of ignoring $g_{\text{score}}$ for optimization.
We empirically study optimizing with and without the score function term
for a small state space model where standard variance reduction 
techniques, explained below,
 are sufficient. We lower the variance
using Rao-Blackwellization~\citep{Robert2004,Ranganath2014}, noting that the
ancestor variables $a_{t-1}$ have no effect on weights prior to
time~$t$,
\begin{align}
&g_{\text{score}} = 
\nonumber\\
& \sum_{t=2}^{T}
\E
\left[  
\log \frac{\widehat{p}(y_{1:T})}{\widehat{p}(y_{1:t-1})}
\left(\sum_{i=1}^N \nabla \log \frac{w_{t-1}^{a_{t-1}^i}}{\sum_\ell w_{t-1}^\ell}\right)
 \right].\label{eq:rb}
\end{align}
Furthermore, we use the score 
function $\nabla \log 
\widetilde\phi(a_{1:T-1}^{1:N}\given\varepsilon_{1:T}^{1:N}\g\lambda)$ 
with an estimate of the future log average weights as
a control variate~\citep{Ranganath2014}. 

We found that ignoring the
score function term $g_{\text{score}}$~\eqref{eq:rb} from the ancestor variables, leads to faster 
convergence and very little difference in final \gls{ELBO}. This 
corresponds to approximating the gradient of $\widetilde\Ls$ by
\begin{align}
\nabla \widetilde\Ls(\lambda) &\approx \E
\left[ \nabla\log \widehat{p}(y_{1:T})  \right] = g_{\text{rep}}.
\label{eq:proxgrad}
\end{align}
This is the gradient we propose to use for optimizing the variational 
parameters of \gls{VSMC}.
See the supplementary material \ref{sec:supp:so} for more details, 
where we also provide a general score function-like 
estimator and the control variates.

\paragraph{Algorithm.} We now describe the full algorithm to optimize
the \gls{VSMC} variational approximation.  We form stochastic gradients
$\widehat \nabla \widetilde{\mathcal{L}}(\lambda)$ by estimating 
\eqref{eq:proxgrad} using a single sample from
$s(\cdot)\widetilde\phi(\cdot\given\cdot\g\lambda)$. The sample is
obtained as a byproduct of sampling \gls{VSMC}
(Algorithm~\ref{alg:vsmc}).  We use the step-size sequence Adam 
\citep{KingmaB2015} or $\rho^n$ proposed by \citet{Kucukelbir2016},
\begin{align}
    &\rho^n = \eta \cdot n^{-1/2 + \delta} \cdot \left(1 + 
    \sqrt{s^n}\right)^{-1}, \nonumber\\
    &s^n = t \left( \widehat \nabla \widetilde{\mathcal{L}}(\lambda^n) \right)^2 + (1-t) s^{n-1},
    \label{eq:stepsize}
\end{align}
where $n$ is the iteration number. We set $\delta = 10^{-16}$ and
$t=0.1$, and we try different values for $\eta$. 
Algorithm~\ref{alg:so} summarizes this optimization algorithm.\footnote{Reference implementation using Adam is available at 
\url{github.com/blei-lab/variational-smc}.}
\begin{algorithm}[t]
\caption{Stochastic Optimization for \gls{VSMC}}\label{alg:so}
\begin{algorithmic}[1]
\REQUIRE Data $y_{1:T}$, model $p(x_{1:T},y_{1:T})$, proposals 
$r(x_t\given x_{t-1}\g\lambda)$, number of particles $N$
\ENSURE Variational parameters $\lambda^\star$
\REPEAT
\STATE Estimate the gradient $\widehat \nabla \widetilde{\mathcal{L}}(\lambda^n)$ 
given by \eqref{eq:proxgrad}
\STATE Compute stepsize $\rho^n$ with \eqref{eq:stepsize}
\STATE Update $\lambda^{n+1} = \lambda^n + \rho^n \widehat \nabla \widetilde{\mathcal{L}}(\lambda^n)$
\UNTIL \textbf{convergence}
\end{algorithmic}
\end{algorithm}

\paragraph{Variational Expectation Maximization.}
Suppose the target distribution of
interest~${p(x_{1:T} \given y_{1:T}\g\theta)}$ has a set of unknown
parameters~$\theta$.  We can fit the parameters using \gls{VEM}
\citep{Beal2003}. The surrogate \gls{ELBO} is updated accordingly
\begin{align}
\log p(y_{1:T}\g\theta) \geq
\tilde{\mathcal{L}}(\lambda,\theta)\label{eq:lb:vem}
\end{align}
where the normalization constant~$p(y_{1:T}\g\theta)$ is now a
function of the parameters~$\theta$. Note that the expression for
$\tilde{\mathcal{L}}(\lambda,\theta)$ is exactly the same as
\eqref{eq:lb}, but where the weights (and potentially proposals) now
include a dependence on the model parameters $\theta$. Analogously,
the reparameterization gradients have the same form as
\eqref{eq:proxgrad}. We can maximize \eqref{eq:lb:vem},
with respect to both~$\theta$ and~$\lambda$, using stochastic
optimization. With data subsampling, \gls{VSMC} extends to large-scale datasets
of conditionally independent sequences \citep{Hoffman2013,titsias2014doubly}.



\section{Perspectives on Variational \gls{SMC}}
\label{sec:talking}
We give some perspectives on \gls{VSMC}. First, we consider the \gls{VSMC} special cases
of~${N=1}$ and~${T=1}$.  For $N=1$, \gls{VSMC} reduces to a structured
variational approximation:
there is no resampling and the variational distribution is exactly the
proposal. For $T=1$, \gls{VSMC} leads to a special case we call
\acrlong{VIS}, and a reinterpretation of the \gls{IWAE}
\citep{Burda2016}, which we explore further in the first half of this section.

Then, we think of sampling from \gls{VSMC} as sampling a highly
optimized \gls{SMC} approximation. This means many of the theoretical
\gls{SMC} results developed over the past 25 years 
can be adapted for \gls{VSMC}. We explore some examples in the second
half of this section. 

\paragraph{Variational Importance Sampling (VIS).}
The case where ${T=1}$ is \gls{SMC} without
any resampling, \ie, importance sampling. The
corresponding special case of \gls{VSMC} is \gls{VIS}. The
surrogate \gls{ELBO} for \gls{VIS} is exactly equal to the
\gls{IWAE} lower bound \citep{Burda2016}.

This equivalence provides new intuition behind the \gls{IWAE}'s variational approximation
on the latent variables. If we
want to make use of the approximation $q(x_{1:T}\g\lambda^\star)$
learned with the \gls{IWAE} lower bound, samples from the latent
variables should be generated with Algorithm~\ref{alg:vsmc}, \ie \gls{VIS}.
\begin{figure}[t]
    \centering
    \includegraphics[width=0.4\textwidth]{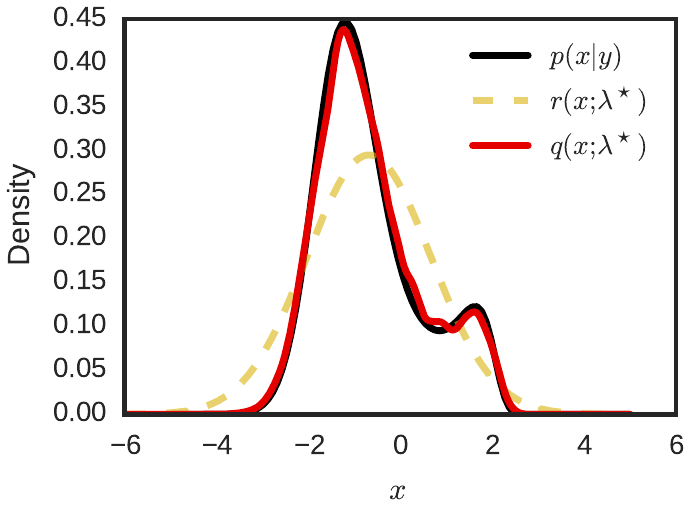}
\caption{Example of \gls{VIS} $q(x\g\lambda)$ approximating a multimodal 
$p(x\mid y)$ with a Gaussian proposal $r(x\g \lambda)$.}
\label{fig:vis}
\end{figure}
For \gls{VIS} it is possible to show that the surrogate \gls{ELBO} is 
always tighter than the one obtained by standard \gls{VB} (equivalent 
to \gls{VIS} with $N=1$) \citep{Burda2016}.
This result does not carry over to \gls{VSMC},
\ie we can find cases when the resampling creates a looser
bound compared to standard \gls{VB} or \gls{VIS}. However, in practice the 
\gls{VSMC} lower bound outperforms the
\gls{VIS} lower bound.

Figure~\ref{fig:vis} provides a simple example of \gls{VIS} 
applied to a multimodal ${p(x \given y)\propto 
\mathcal{N}(x\g 0,1) \, \mathcal{N}(y\g x^2/2, e^{x/2})}$ with a normal proposal 
${r(x\g\lambda)=\mathcal{N}(x\g\mu,\sigma^2)}$ and a kernel density estimate of the corresponding 
variational approximation $q(x\g\lambda)$. The number of 
particles is $N=10$. Standard \gls{VB} with a Gaussian approximation 
only captures one of the two modes; which one depends on the initialization.
We see that even a simple proposal can lead to a very flexible 
posterior approximation. This property is also inherited by the more 
general $T>1$ case, \gls{VSMC}.

\paragraph{Theoretical Properties.}
The normalization constant estimate of the \gls{SMC} sampler, 
$\widehat{p}(y_{1:T})$, is unbiased \citep{DelMoral2004,Pitt2012,Naesseth2014}. 
This, together with Jensen's inequality, implies that the 
surrogate \gls{ELBO}~$\E[\log \widehat p(y_{1:T})]$ is a lower 
bound to~$\log p(y_{1:T})$. If~$\log \widehat p(y_{1:T})$ is uniformly 
integrable it follows \citep{DelMoral2004}, as~$N\to\infty$, that
\begin{align*}
\widetilde{\mathcal{L}}(\lambda) =  \mathcal{L}(\lambda) = \log 
p(y_{1:T}).
\end{align*}
This fact means that the gap in Theorem~\ref{thm:lb} disappears and the distribution of 
the trajectory returned by \gls{VSMC} will 
tend to the true target distribution~${p(x_{1:T} \given y_{1:T})}$. 
A bound on the \gls{KL} divergence gives us the rate
\begin{align*}
\operatorname{KL}\left(q(x_{1:T}\g\lambda)\,\Big\|\,p(x_{1:T} \given y_{1:T})\right)
\leq \frac{c(\lambda)}{N},
\end{align*}
for some constant $c(\lambda) < \infty$. This is a special 
case of a ``propagation of chaos'' result from \citet[Theorem 
8.3.2]{DelMoral2004}.

We can arrive at this result informally by studying 
\eqref{eq:var-dist}: as the number of particles increases, the marginal likelihood
estimate will converge to the true marginal likelihood and the
variational posterior will converge to the true posterior. 
\citet{Huggins2017} provide further 
bounds on various divergences and metrics between \gls{SMC} and the 
target distribution.

\paragraph{\gls{VSMC} and $T$.} 
Like \gls{SMC}, \acrlong{VSMC} scales well with $T$. 
\citet{berard2014} show a central limit theorem for the \gls{SMC} 
approximation ${\log \widehat p(y_{1:T})-\log p(y_{1:T})}$ with~${N = b T}$, where~${b>0}$,
as~${T\to\infty}$. Under the same conditions as in that work, 
and assuming that~${\log \widehat p(y_{1:T})}$ is uniformly integrable, we 
can show that
\begin{align*}
&\operatorname{KL}\left(q(x_{1:T}\g\lambda)\,\Big\|\,p(x_{1:T} \given y_{1:T})\right)
\leq -\E\left[\log\frac{\widehat p(y_{1:T})}{p(y_{1:T})}\right]\\
&\qquad\xrightarrow[T\to\infty]{} \frac{\sigma^2(\lambda)}{2b}, \quad 0 < \sigma^2(\lambda) < \infty.
\end{align*}
The implication for \gls{VSMC} is significant.  We can make the
variational approximation \textit{arbitrarily accurate} by
setting~${N\propto T}$, even as~$T$ goes to infinity.  The supplement
shows that this holds in practice; see~\ref{sec:supp:independent} for
the toy example from Figure~\ref{fig:fig1}. We emphasize that neither
standard \gls{VB} nor \gls{IWAE} (\gls{VIS}) have this property.

\begin{figure*}[tb]
    \begin{subfigure}{0.49\textwidth}
        \setlength{\tabcolsep}{.1pt}
        \newcolumntype{M}{>{\centering\arraybackslash}m{\dimexpr.5\columnwidth}}
        \centering
        \begin{tabular}{M M}
            & \\
            \includegraphics[width=.5\columnwidth]{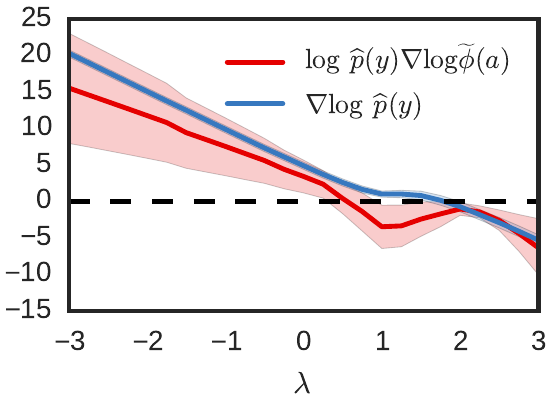} & 
            \includegraphics[width=.5\columnwidth]{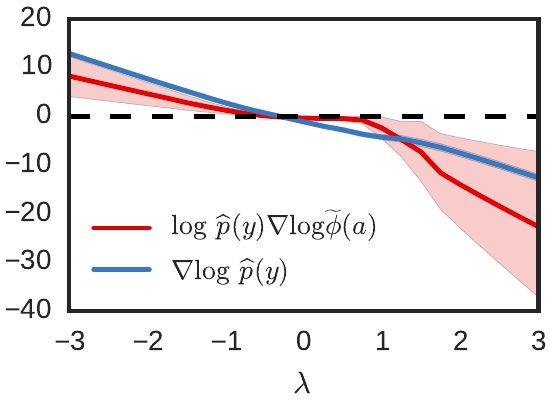} \\
            &\\
            \includegraphics[width=.5\columnwidth]{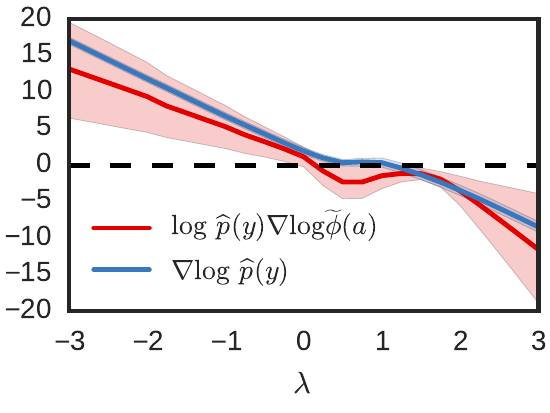} & 
            \includegraphics[width=.5\columnwidth]{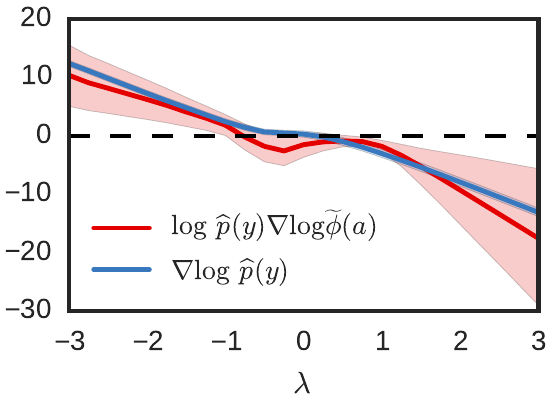} 
        \end{tabular}
    \end{subfigure}
    ~
    \begin{subfigure}{0.49\textwidth}
        \setlength{\tabcolsep}{.1pt}
        \newcolumntype{M}{>{\centering\arraybackslash}m{\dimexpr.5\columnwidth}}
        \centering
        \begin{tabular}{M M} 
            \footnotesize $d_x = 10, d_y=1, C$ sparse  & \footnotesize $d_x = 
            25, d_y=1, C$ dense\\
            \includegraphics[width=.5\columnwidth]{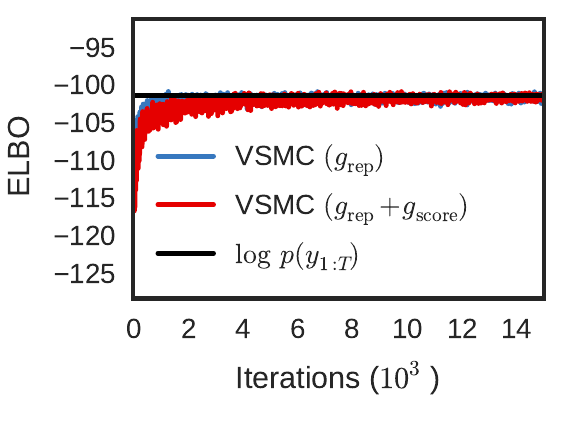} & 
            \includegraphics[width=.5\columnwidth]{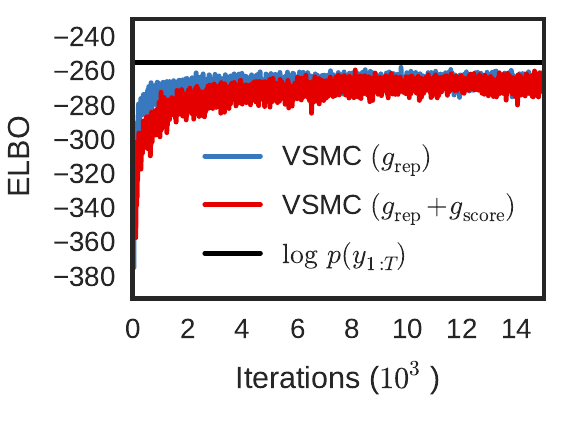} \\
            \footnotesize $d_x = 10, d_y=3, C$ dense  & \footnotesize $d_x = 
            25, d_y=25, C$ sparse\\
            \includegraphics[width=.5\columnwidth]{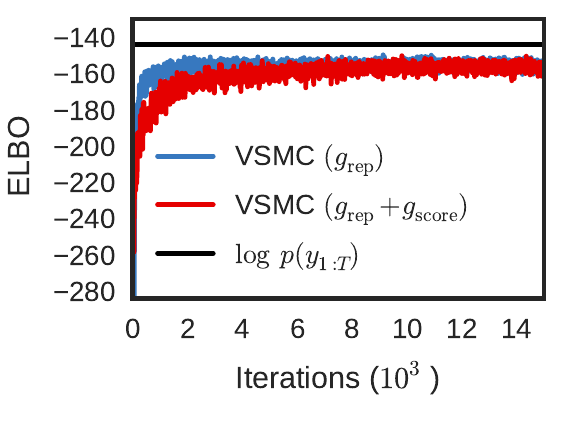} & 
            \includegraphics[width=.5\columnwidth]{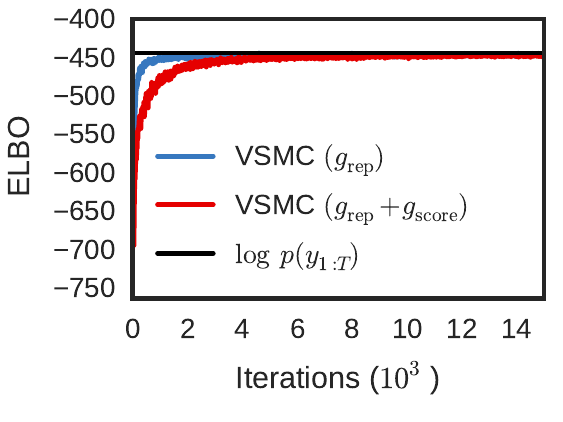} 
        \end{tabular}
    \end{subfigure}
    \caption{\emph{(Left)} Mean and spread of the stochastic gradient components
        $g_{\text{score}}$~\eqref{eq:rb} and $g_{\text{rep}}$~\eqref{eq:proxgrad}, 
        for the scalar linear Gaussian model 
        on four randomly generated datasets, 
        where the number of particles is $N=2$.
        \emph{(Right)} Log-marginal likelihood ($\log p(y_{1:T})$) and \gls{ELBO} 
        as a function of iterations for \gls{VSMC} 
        with biased gradients (blue) or unbiased gradients (red). 
        Results for four different linear Gaussian models.}\label{fig:bVSu}
\end{figure*}
\section{Empirical Study}\label{sec:expts}
\paragraph{Linear Gaussian State Space Model}
The linear Gaussian \gls{SSM} is a ubiquitous model of time
series data that enjoys efficient algorithms for computing the exact
posterior.  We use this model to study the convergence properties and 
impact of biased gradients for \gls{VSMC}. We further use it to 
confirm that we learn good proposals. We compare to the bootstrap 
particle filter (\gls{BPF}), which uses the prior as a proposal, and 
the (locally) optimal proposal that tilts the prior with the likelihood.

The model is
\begin{align*}
x_t &= A x_{t-1} +v_t, \\
y_t &= C x_t+e_t,
\end{align*}
where $v_t \sim \mathcal{N}(0,Q)$, $ e_t \sim \mathcal{N}(0,R)$, and
$x_1\sim \mathcal{N}(0,I)$. 
The log-marginal likelihood $\log p(y_{1:T})$
can be computed using the Kalman filter. 

We study the impact of the biased gradient \eqref{eq:proxgrad} for 
optimizing the surrogate \gls{ELBO} \eqref{eq:lb}.
First, consider a simple 
scalar model with~${A=0.5}$,~${Q=1}$, ${C=1}$, ${R=1}$, and ${T=2}$. For the 
proposal we use 
${r(x_t\given x_{t-1}\g\lambda)=\mathcal{N}(x_t\g\lambda+0.5 
x_{t-1},1)}$, with ${x_0 \equiv 0}$. Figure~\ref{fig:bVSu} (left) shows the mean 
and spread of estimates of $g_{\text{score}}$~\eqref{eq:rb}, with control variates, and 
$g_{\text{rep}}$~\eqref{eq:proxgrad}, as a function of $\lambda$ for four randomly 
generated datasets.
The optimal setting of $\lambda$ is where the sum of the 
means is equal to zero. Ignoring the score function term 
$g_{\text{score}}$~\eqref{eq:rb} will 
lead to a perturbation of the optimal $\lambda$. However, 
even for this simple model, the variance of the score 
function term (red) is several orders of magnitude 
higher than that of the reparameterization term (blue), despite
the variance reduction techniques of Section~\ref{sec:vsmc}. This 
variance has a significant 
impact on the convergence speed of the stochastic optimization. 

Next, we study the magnitude of the 
perturbation, and its effect on the surrogate \gls{ELBO}.
We generate data with $T=10$, $(A)_{ij}= \alpha^{|i-j|+1}$ for 
$\alpha=0.42$, $Q=I$, and $R=I$. We explored several 
settings of $d_x=\dim(x_t)$, $d_y=\dim(y_t)$, and $C$. Sparse $C$ 
measures the first $d_y$ components of $x_t$, and dense $C$ has 
randomly generated elements $C_{ij} \sim \mathcal{N}(0,1)$. 
Figure~\ref{fig:bVSu} (right) shows the true log-marginal likelihood and 
\gls{ELBO} as a function of iteration. It shows \gls{VSMC} with biased gradients
(blue) and unbiased gradients (red). We choose the proposal
\begin{align*}
r(x_t\mid x_{t-1}\g\lambda) &= \mathcal{N}\left(x_t\mid \mu_t + 
\operatorname{diag}(\beta_t) A x_{t-1}, 
\operatorname{diag}(\sigma_t^2)\right).
\end{align*}
with $\lambda = \left\{ \mu_t,\beta_t,\sigma_t^2\right\}_{t=1}^T$, and 
set the number of particles to $N=4$. Note that while the 
gradients are biased, the resulting \gls{ELBO} is not.
We can see that the final \gls{VSMC} \gls{ELBO} values are very 
similar, regardless of whether we train with biased or unbiased gradients. 
However, biased gradients converge faster. Thus, we use biased gradients
in the remainder of our experiments.

Next, we study the effect of learning the proposal using \gls{VSMC} 
compared with standard proposals in the \gls{SMC} literature. The most 
commonly used is the \gls{BPF}, sampling from the 
prior $f$. We also consider the so-called optimal proposal, $r \propto 
f \cdot g$, which minimizes the variance of the 
incremental importance weights \citep{doucet2009tutorial}. Table~\ref{tab:vsmcVSopt}
shows results for a linear Gaussian \gls{SSM} when 
${T=25}$, ${Q = 0.1^2 I}$, ${R=1}$, ${d_x=10}$, and~${d_y=1}$. Because of the 
relatively high-dimensional state, \gls{BPF} exhibits significant bias
whereas the optimal proposal \gls{SMC} performs much better. 
\gls{VSMC} outperforms them both, learning an accurate proposal 
that results in an \gls{ELBO} only $0.9$ nats lower than the true 
log-marginal likelihood. We further emphasize that the optimal proposal 
is unavailable for most models.
\begin{table}[tb]
\centering
\caption{\gls{ELBO} for \gls{BPF}, \gls{SMC} with (locally) optimal 
proposal, and \gls{VSMC}. The true log-marginal likelihood is given 
by $\log p(y_{1:T}) = -236.9$.}
\label{tab:vsmcVSopt}
\begin{tabular}{cccc}
& \gls{BPF} & Optimal \gls{SMC} & \gls{VSMC} \\
\gls{ELBO} & $-6701.4$ & $-253.4$ & $\mathbf{-237.8}$
\end{tabular}
\end{table}

\paragraph{Stochastic Volatility}
A common model in financial econometrics is the (multivariate) 
stochastic volatility model~\citep{Chib2009}.
The model is
\begin{align*}
x_t &= \mu +\phi(x_{t-1}-\mu)+v_t, \\
y_t &= \beta\exp{\left(\frac{x_t}{2}\right)}e_t,
\end{align*}
where ${v_t \sim \mathcal{N}(0,Q)}$, ${e_t \sim \mathcal{N}(0,I)}$, 
${x_1\sim \mathcal{N}(\mu,Q)}$, and ${\theta = (\mu, \phi, Q, \beta)}$. 
(In the multivariate case, multiplication is element-wise.) Computing 
${\log p(y_{1:T}\g\theta)}$ and its gradients for this model is 
intractable, we study the \gls{VEM} approximation to find the 
unknown parameters $\theta$. We compare \gls{VSMC} with \gls{IWAE} 
and
structured \gls{VI}. 
For the proposal in \gls{VSMC} and \gls{IWAE} we choose
\begin{align*}
r(x_t \given x_{t-1}\g\lambda,\theta) \propto f(x_t\given 
x_{t-1}\g\theta) \, \mathcal{N}(x_t\g \mu_t, \Sigma_t),
\end{align*}
with variational parameters 
${\lambda = (\mu_1,\ldots,\mu_T, \Sigma_1, \ldots, \Sigma_T)}$. We 
define the variational approximation for structured \gls{VI} to be
$q(x_{1:T}\g\lambda,\theta) = \prod_{t=1}^T r(x_t \given x_{t-1}\g\lambda,\theta)$. 

We study $10$ years of monthly returns ($9/2007$ to $8/2017$) for the 
exchange rate of $22$ international currencies with respect to US 
dollars. The data is from the Federal Reserve 
System. 
Table~\ref{tab:sv} reports the optimized \gls{ELBO} (higher is 
better) for different settings of the number of particles/samples 
$N=\{4,8,16\}$. \gls{VSMC} outperforms the competing methods 
with almost $0.2$ nats per time-step.
\begin{table}[tb]
\centering
\caption{\gls{ELBO} for the stochastic volatility model with ${T=119}$ 
on exchange rate data. We compare \gls{VSMC} (this paper) with 
\gls{IWAE} and structured \gls{VI}.}
\label{tab:sv}
\begin{tabular}{ccc}
&Method & \gls{ELBO} \\
\rule{0pt}{3ex}
& Structured \gls{VI} & $6905.1$ \\
\hline
\multirow{2}{*}{$N=4$} & \gls{IWAE} & $6911.2$ \\
 & \textbf{\gls{VSMC}} & $\mathbf{6921.6}$ \\
\hline
 \multirow{2}{*}{$N=8$} & \gls{IWAE} & $6912.4$ \\
 & \textbf{\gls{VSMC}} & $\mathbf{6935.8}$ \\
\hline
 \multirow{2}{*}{$N=16$} & \gls{IWAE} & $6913.3$ \\
 & \textbf{\gls{VSMC}} & $\mathbf{6936.6}$ \\
\end{tabular}
\end{table}

In theory we can improve the bound of both \gls{IWAE} and 
\gls{VSMC} by increasing the number of samples $N$. 
This means we can first learn proposals using 
only a few particles $N$, for computational efficiency. Then, at test 
time, we can increase $N$ as needed for improved accuracy.
We study the impact of increasing the number of samples for \gls{VSMC} 
and \gls{IWAE} using fix $\theta^\star$ and $\lambda^\star$ optimized 
with $N=16$. Figure~\ref{fig:sv} shows that the gain for \gls{IWAE} is 
limited, whereas for \gls{VSMC} it can be significant.

\begin{figure}[h]
  \centering
  \includegraphics[width=2.5in]{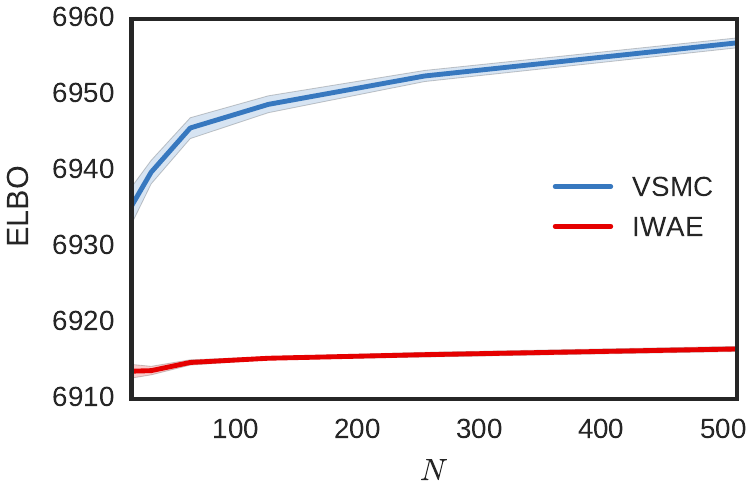}
  \vspace{-1em}
  \caption{The estimated \gls{ELBO} for \gls{VSMC} (this paper) and 
\gls{IWAE}
, with confidence bands, as a 
function of the number of particles $N$ for fix $\theta^\star, 
~\lambda^\star$.}\label{fig:sv}
\end{figure}

%
\paragraph{Deep Markov Model}

An important problem in neuroscience is understanding
dynamics of neural circuits. We study a population of 105
motor cortex neurons simultaneously recorded in a macaque monkey as it
performed reaching movements~\citep[c.f.][]{Gao2016}.  In each trial, the
monkey reached toward one of fourteen targets; each trial is~${T=21}$
time steps long. We train on~${700}$ trials and test on~$84$.

We use recurrent neural networks to model both the 
dynamics and observations. The model is
\begin{align*}
x_t &= \mu_\theta(x_{t-1})+\exp\left(\sigma_\theta (x_{t-1})/2\right) v_t, \\
y_t &\sim \operatorname{Poisson}\left(\exp{\left(\eta_\theta(x_t)\right)}\right),
\end{align*}
where $v_t \sim \mathcal{N}(0,I)$, $x_0 \equiv 0$, and $\mu, \sigma, 
\eta$ are neural networks parameterized by $\theta$. The 
multiplication in the transition dynamics is element-wise. This is a 
deep Markov model \citep{Krishnan2017}.

For inference we use the following proposal for both 
\gls{VSMC} and \gls{IWAE},
\begin{align*}
r(x_t\given x_{t-1}, y_t\g\lambda) &\propto 
\mathcal{N}\left(x_t\g 
\mu_\lambda^x(x_{t-1}),\exp\left(\sigma_\lambda^x(x_{t-1})\right)\right) \\
&\times \,
\mathcal{N}\left(x_t\g 
\mu_\lambda^y(y_t),\exp\left(\sigma_\lambda^y(y_{t})\right)\right),
\end{align*}
where~$\mu^x,\sigma^x,\mu^y,\sigma^y$ are neural networks 
parameterized by~$\lambda$, and the proposal factorizes over the 
components of~$x_t$.
\begin{figure}[tb]
\centering
\includegraphics[width=.8\columnwidth]{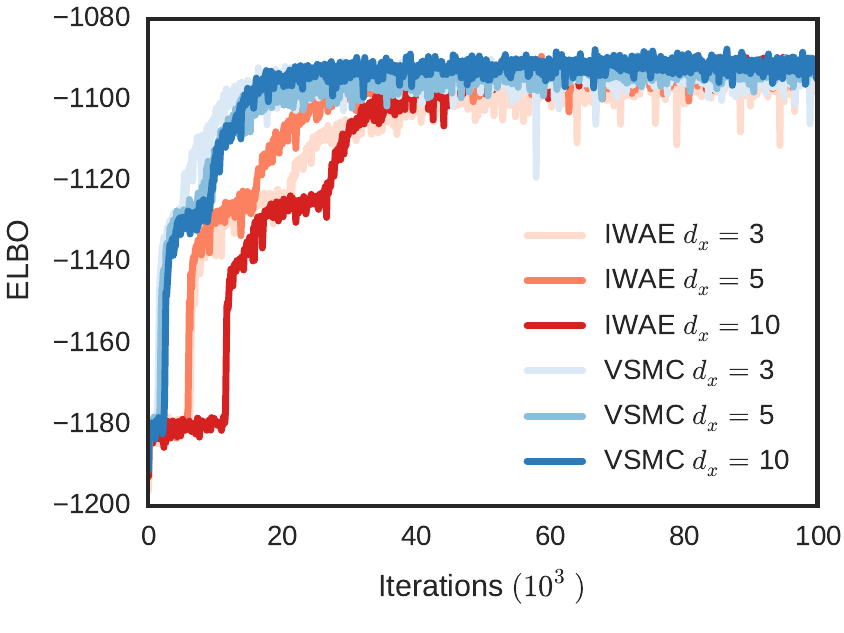}
\caption{The estimated \gls{ELBO} of the neural population test data as 
a function of iterations for \gls{VSMC} (this paper) and \gls{IWAE}
, for ${d_x =\{3,5,10\}}$ and $T=21$.}\label{fig:neural}
\end{figure}
Figure~\ref{fig:neural} illustrates the result for~${d_x = \{3,5,10\}}$ 
with~${N=8}$. \gls{VSMC}
gets to the same \gls{ELBO} faster.


\glsresetall
\section{Conclusions}

We introduced the \gls{VSMC} family, a new variational approximating 
family that
provides practitioners with a flexible, accurate, and powerful
approximate Bayesian inference algorithm. \gls{VSMC} melds \gls{VI}
and \gls{SMC}. This results in a variational approximation that lets us 
trade-off fidelity to the posterior with computational complexity.

\section*{Acknowledgements}
Christian A.\ Naesseth is supported by CADICS, a Linnaeus Center, 
funded by the Swedish Research Council (VR). 
Scott W. Linderman is supported by the Simons Foundation SCGB-418011.
This work is supported by ONR N00014-11-1-0651, DARPA
PPAML FA8750-14-2-0009, the Alfred P. Sloan Foundation, 
and the John Simon Guggenheim Foundation.

\bibliography{references}
\bibliographystyle{abbrvnat}

\newpage
\appendix
\onecolumn
\section{Variational Sequential Monte Carlo -- Supplementary Material}
\subsection{Proof of Proposition~\ref{prop:dist}}\label{sec:pf:dist}
We start by noting that the distribution of all random variables 
generated by the \gls{VSMC} algorithm is given by
\begin{align}
&\widetilde\phi(x_{1:T}^{1:N},a_{1:T-1}^{1:N},b_T \g\lambda) = \frac{w_T^{b_T}}{\sum_{\ell}w_T^\ell} 
\prod_{i=1}^N r(x_1^i\g\lambda) \cdot \prod_{t=2}^T \prod_{i=1}^N
\frac{w_{t-1}^{a_{t-1}^i}}{\sum_\ell w_{t-1}^\ell} r(x_t^i | 
x_{t-1}^{a_{t-1}^i}\g\lambda).\label{eq:supp:vsmc-all}
\end{align}
We are interested in the marginal distribution 
$q(x_{1:T}\g\lambda)\triangleq \widetilde\phi(x_{1:T}\g\lambda) = 
\E_{b_{1:T}}[\widetilde\phi(x_{1:T}^{b_{1:T}},b_{1:T}\g\lambda)]$.
A key observation is that the distribution of $b_{1:T}\mid x_{1:T}$, 
the conditional distribution of the ancestral path of the returned particle, 
is uniform on $\left\{1,\ldots,N\right\}^T$. 
Thus we get
\begin{align}
&q(x_{1:T}\g\lambda) = 
\frac{\widetilde\phi(x_{1:T}^{b_{1:T}},b_{1:T}\g\lambda)}{\widetilde\phi(b_{1:T}\mid x_{1:T}\g\lambda)} 
=\frac{1}{N^{-T}}  \sum_{a_{1:T-1}^{-b_{1:T-1}}}\int 
\widetilde\phi(x_{1:T}^{b_{1:T}},x_{1:T}^{\neg 
b_{1:T}},a_{1:T-1}^{\neg b_{1:T-1}} \g\lambda)
\,\myd x_{1:T}^{\neg b_{1:T}},
\label{eq:supp:vsmc}
\end{align}
where
\begin{align*}
&\frac{1}{N^{-T}}  \widetilde\phi(x_{1:T}^{b_{1:T}},x_{1:T}^{\neg b_{1:T}},a_{1:T-1}^{\neg b_{1:T-1}} \g\lambda) \\
&= N^T  
\frac{w_1^{b_1}}{\sum_\ell w_1^\ell}r(x_1^{b_1}\g\lambda)\prod_{t=2}^T 
\frac{w_t^{b_t}}{\sum_\ell w_t^\ell} r(x_t^{b_t}\mid 
x_{t-1}^{b_{t-1}}\g\lambda) \cdot
\prod_{\substack{i=1 \\ i\neq b_1}}^N r(x_1^i\g\lambda) \cdot \prod_{t=2}^T 
\prod_{\substack{i=1 \\ i\neq b_t}}^N
\frac{w_{t-1}^{a_{t-1}^i}}{\sum_\ell w_{t-1}^\ell} r(x_t^i | 
x_{t-1}^{a_{t-1}^i}\g\lambda) \\
&= p(x_1^{b_1},y_1)\prod_{t=2}^T 
\frac{p(x_{1:t}^{b_{1:t}},y_{1:t})}{p(x_{1:t-1}^{b_{1:t-1}},y_{1:t-1})}
\prod_{t=1}^T \frac{1}{\frac{1}{N} \sum_\ell w_t^\ell} \cdot
\prod_{\substack{i=1 \\ i\neq b_1}}^N r(x_1^i\g\lambda) \cdot \prod_{t=2}^T 
\prod_{\substack{i=1 \\ i\neq b_t}}^N
\frac{w_{t-1}^{a_{t-1}^i}}{\sum_\ell w_{t-1}^\ell} r(x_t^i | 
x_{t-1}^{a_{t-1}^i}\g\lambda) \\
&= p(x_{1:T}^{b_{1:T}},y_{1:T}) \prod_{t=1}^T \frac{1}{\frac{1}{N} \sum_\ell w_t^\ell}
\cdot \widetilde\phi(x_{1:T}^{\neg b_{1:T}},a_{1:T-1}^{\neg b_{1:T-1}} \g\lambda).
\end{align*}
We insert the above expression in \eqref{eq:supp:vsmc} and we get
\begin{align}
q(x_{1:T}\g\lambda) &= p(x_{1:T}^{b_{1:T}},y_{1:T})
\sum_{a_{1:T-1}^{\neg b_{1:T-1}}}\int \left(
\prod_{t=1}^T \frac{1}{N} \sum_{i=1}^N w_t^i \right)^{-1}
\cdot \widetilde\phi(x_{1:T}^{\neg b_{1:T}},a_{1:T-1}^{\neg b_{1:T-1}} \g\lambda)
\,\myd x_{1:T}^{\neg b_{1:T}} \nonumber \\
&= p(x_{1:T}^{b_{1:T}},y_{1:T})
\E_{\widetilde\phi(x_{1:T}^{\neg b_{1:T}},a_{1:T-1}^{\neg b_{1:T-1}} \g\lambda)}
\left[ \left(
\prod_{t=1}^T \frac{1}{N} \sum_{i=1}^N w_t^i \right)^{-1}\right].
\end{align}
\hfill$\square$

\subsection{Proof of Theorem~\ref{thm:lb}}\label{sec:pf:lb}
The \gls{ELBO}, using the above result about the distribution of 
$q(x_{1:T}\g\lambda)$, is given by
\begin{align}
\mathcal{L}(\lambda) &= \E_{q(x_{1:T}\g\lambda)}\left[\log 
p(x_{1:T},y_{1:T}) -\log q(x_{1:T}\g\lambda)\right] 
\nonumber \\
&= -\int \left\{p(x_{1:T}^{b_{1:T}},y_{1:T}) 
\E_{\widetilde\phi(x_{1:T}^{\neg b_{1:T}},a_{1:T-1}^{\neg b_{1:T-1}} \g\lambda)}
\left[ \frac{1}{
\prod_{t=1}^T \frac{1}{N} \sum_{i=1}^N w_t^i }\right] \cdot \nonumber \right.\\
&\left.\qquad\qquad\qquad\cdot 
\log  
\E_{\widetilde\phi(x_{1:T}^{\neg b_{1:T}},a_{1:T-1}^{\neg b_{1:T-1}} \g\lambda)}
\left[ \frac{1}{
\prod_{t=1}^T \frac{1}{N} \sum_{i=1}^N w_t^i }\right] \right\}
\myd x_{1:T}^{b_{1:T}}.
\label{eq:supp:exact}
\end{align}
Note that $-t \log t$ is a concave function for $t>0$, this means by 
the conditional Jensen's inequality we have $-\E[t]\log \E[t] \geq 
-\E[t\log t]$. If we apply this to \eqref{eq:supp:exact} we get
\begin{align*}
\mathcal{L}(\lambda) &\geq \int 
\E_{\widetilde\phi(x_{1:T}^{\neg b_{1:T}},a_{1:T-1}^{\neg b_{1:T-1}} \g\lambda)}
\left[ \frac{p(x_{1:T}^{b_{1:T}},y_{1:T})}{
\prod_{t=1}^T \frac{1}{N} \sum_{i=1}^N w_t^i }
  \sum_{t=1}^T \log 
\left(\frac{1}{N}\sum_{i=1}^N w_t^i\right)\right]\myd 
x_{1:T}^{b_{1:T}} \nonumber\\
&= \E_{\widetilde\phi(x_{1:T}^{1:N},a_{1:T-1}^{1:N}\g\lambda)}\left[\sum_{t=1}^T \log 
\left(\frac{1}{N}\sum_{i=1}^N w_t^i\right)\right] = 
\widetilde{\mathcal{L}}(\lambda),
\end{align*}
where the last step follows because $q(x_{1:T}\g\lambda)$ is 
the marginal of $\widetilde\phi(x_{1:T}^{1:N},a_{1:T-1}^{1:N}\g\lambda)$.
\hfill$\square$

\subsection{Stochastic Optimization}\label{sec:supp:so}

For the control variates we use
\begin{align*}
\sum_{t=2}^T c_t \E_{s(\cdot)\widetilde\phi(\cdot\mid\cdot\g\lambda)}
\left[\sum_{i=1}^N \nabla \log w_{t-1}^{a_{t-1}^i} - 
\sum_{\ell=1}^N \frac{w_{t-1}^\ell}{\sum_m w_{t-1}^m} \nabla \log 
w_{t-1}^\ell \right]
\end{align*}
where
\begin{align*}
c_t = \E_{s(\cdot)\widetilde\phi(\cdot\mid\cdot\g\lambda)}
\left[  \sum_{t'=t}^T 
\log\left(\frac{1}{N}\sum_{i=1}^N 
w_{t'}^i\right) \right].
\end{align*}
In practice we use a stochastic estimate of $c_t$.

For $T=2$ we can use a leave-one-out estimator of the ancestor variable 
score function gradient
\begin{align*}
\sum_{i=1}^N \E_{s(\cdot)\widetilde\phi(\cdot\mid\cdot\g\lambda)}
\left[\log \left(\frac{N-1}{N}\frac{\sum_{\ell=1}^N w_2^\ell}{\sum_{j\neq i}w_2^j}\right) \left(\nabla \log w_{1}^{a_{1}^i} - 
\sum_{\ell=1}^N \frac{w_{1}^\ell}{\sum_m w_{1}^m} \nabla \log 
w_{1}^\ell \right)\right].
\end{align*}

\paragraph{Score Function Gradient}\label{sec:scfnc}
Below we provide the derivation of a score function-like estimator 
that is applicable in very general settings. However, we have found 
that in practice the variance tends to be quite high.
\begin{align*}
\nabla \widetilde\Ls(\lambda) &= \nabla 
\E_{\widetilde\phi(x_{1:T}^{1:N},a_{1:T-1}^{1:N}\g\lambda)}\left[\log 
\widehat{p}(y_{1:T})\right] \\
&= \E_{\widetilde\phi(x_{1:T}^{1:N},a_{1:T-1}^{1:N}\g\lambda)}\left[
\nabla \log \widehat{p}(y_{1:T}) + 
\log \widehat{p}(y_{1:T}) \nabla \log \widetilde\phi(x_{1:T}^{1:N},a_{1:T-1}^{1:N}\g\lambda)
\right],
\end{align*}
with
\begin{align*}
&\nabla \log \widehat{p}(y_{1:T}) = \nabla \sum_{t=1}^T  
\log\left(\frac{1}{N}\sum_{i=1}^N w_t^i\right) = \sum_{t=1}^T\sum_{i=1}^N 
\frac{w_t^i}{\sum_\ell w_t^\ell}\nabla \log w_t^i,
\end{align*}
and
\begin{align*}
&\nabla \log \widetilde\phi(x_{1:T}^{1:N},a_{1:T-1}^{1:N}\g\lambda) \\ 
&= 
\sum_{i=1}^N \left[\nabla \log r(x_1^i\g\lambda) + \sum_{t=2}^T \left[\nabla \log
r(x_t^i|x_{t-1}^{a_{t-1}^i}\g\lambda) + \nabla \log w_{t-1}^{a_{t-1}^i} - 
\sum_{\ell=1}^N \bar w_{t-1}^\ell \nabla \log w_{t-1}^\ell \right] \right].
\end{align*}

\begin{figure}[tb]
    \centering
    \begin{subfigure}[b]{0.48\textwidth}
        \includegraphics[width=1.1\textwidth]{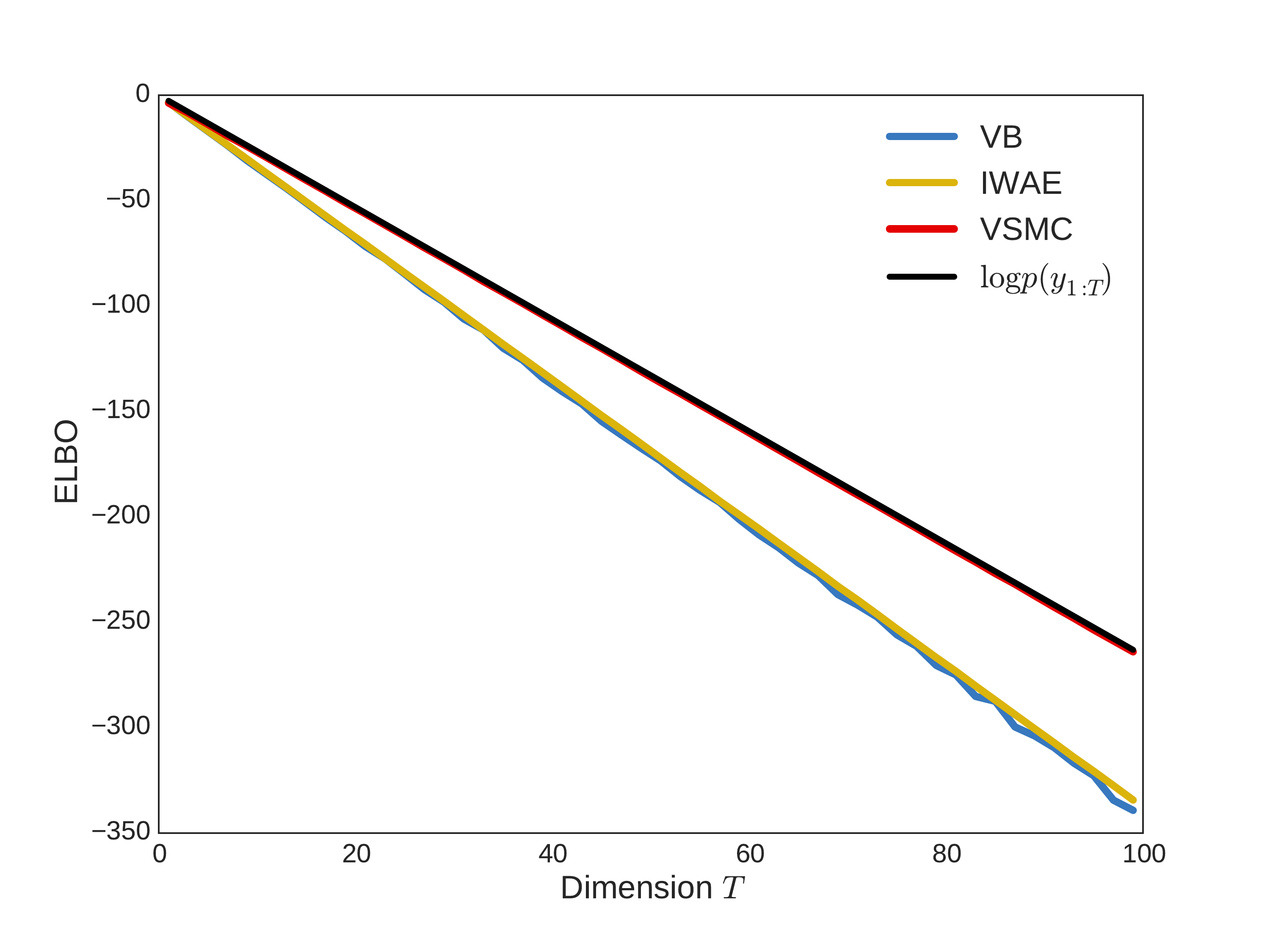}
        \caption{$\lambda_{\acrshort{IWAE}} = \lambda_{\acrshort{VB}}^\star$}
        \label{fig:sameVB}
    \end{subfigure}
    ~ 
    \begin{subfigure}[b]{0.48\textwidth}
        \includegraphics[width=1.1\textwidth]{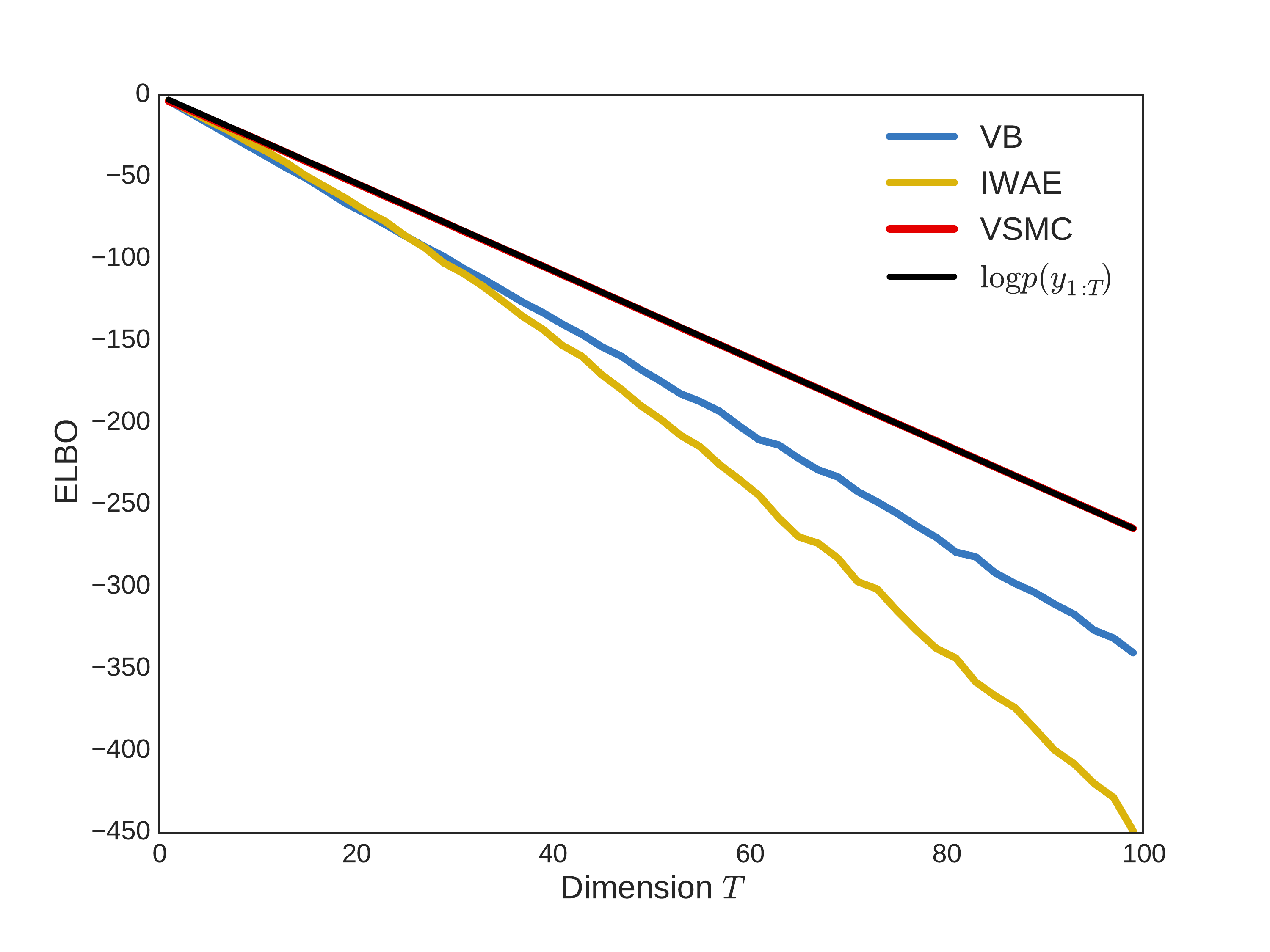}
        \caption{$\lambda_{\acrshort{IWAE}} = \lambda_{\acrshort{VSMC}}^\star$}
        \label{fig:sameVSMC}
    \end{subfigure}
    \caption{\gls{ELBO}, for standard \acrshort{VB}, \acrshort{IWAE}, and \acrshort{VSMC},
as a function of the dimension $T$ of a toy problem. Here we set the number 
of samples in \acrshort{IWAE} and \acrshort{VSMC} to be $N=2T$.}\label{fig:supp:T}
\end{figure}

\subsection{Scaling With Dimension}\label{sec:supp:independent}
In this section we study how the methods compare on a simple toy model 
defined by
\begin{align*}
p(x_{1:T},y_{1:T}) &= \prod_{t=1}^T \mathcal{N}(x_t\g 
0,1)\mathcal{N}(y_t\g x_t^2,1).
\end{align*}
We study the data set $y_t = 3, \forall t$. Figure~\ref{fig:supp:T} 
shows the result when we let the number of samples in \gls{IWAE} 
(\gls{VIS}) and \gls{VSMC} grow with the dimension $N=2T$. For low $T$ the optimal 
parameters for \gls{IWAE} are close to $\lambda_{\gls{VSMC}}^\star$. 
On the other hand for high $T$, the optimal 
parameters for \gls{IWAE} are close to those of standard \gls{VB}, \ie
$\lambda_{\gls{VB}}^\star$. Figure~\ref{fig:supp:T} indicates that just by letting 
$N\propto T$, \gls{VSMC} can achieve arbitrarily good approximation of 
$p(x_{1:T}\mid y_{1:T})$ even if $T\to\infty$. This holds, under some 
regularity conditions, even if $p(x_{1:T},y_{1:T})$ is a state space 
model \citep{berard2014}. This asymptotic approximation property is not 
satisfied by \gls{VIS}, we see in Figure~\ref{fig:supp:T} that the 
approximation deteriorates as $T$ increases.
Note that this does not hold if the dimension of the latent 
space, \ie $\operatorname{dim}(x_t)$, tends to infinity rather than the number of time 
points $T$.

\end{document}